\DeclareMathOperator*{\argmin}{arg\,min}
\DeclareMathOperator*{\radius}{rad}
\DeclareMathOperator*{\trace}{Tr}
\newtheorem{theorem}{Theorem}
\newtheorem{proposition}{Proposition}
\newtheorem{definition}{Definition}
\newtheorem{lemma}{Lemma}
\newtheorem{assumption}{Assumption}
\newtheorem{remark}{Remark}
\newtheorem{problem}{Problem}
\newcommand{\revise}[1]{{\color{black}#1}}
\newcommand*{\LONGVERSION}{}
\title{Learning Safety Filters for Unknown Discrete-Time Linear Systems\thanks{An early version of this paper was presented at NeurIPS 2021 Workshop on Safe and Robust Control of Uncertain Systems, which does not have a proceedings and is only intended for dissemination of the results.}}
\author{Farhad Farokhi\thanks{F. Farokhi and M. Zamani are with the University of Melbourne (\{farhad.farokhi,mohammad.zamani\}@unimelb.edu.au). A. S. Leong is with the Defence Science and Technology (DST) Group, Australia (alex.leong@defence.gov.au). At the time of working on this paper, M. Zamani was also affiliated with the DST Group. I. Shames is with the CIICADA Lab, Australian National University (iman.shames@anu.edu.au).}, Alex S. Leong, Mohammad Zamani, and Iman Shames\thanks{The work of F. Farokhi is supported by a research contract (ID10298) from the Defence Science and Technology (DST) Group, Australia. The work of I.~Shames is partially supported by the Australian Government, via grant AUSMURIB000001 associated with ONR MURI N00014-19-1-2571.}
}
\begin{document}

\maketitle

\begin{abstract}
	A learning-based safety filter is developed for discrete-time linear time-invariant systems with unknown models subject to Gaussian noises with unknown covariance. Safety is characterized using polytopic constraints on the states and control inputs. The empirically learned model and process noise covariance with their confidence bounds are used to construct a robust optimization problem for minimally modifying nominal control actions to ensure safety with high probability. The optimization problem relies on tightening the original safety constraints. The magnitude of the tightening is larger at the beginning since there is little information to construct reliable models, but shrinks with time as more data becomes available. 
\end{abstract}

\section{Introduction}
It is often desired to ensure \textit{safety} of a controlled system. Safety can be defined as maintaining the system's states and control inputs inside a well-defined set, referred to as \textit{safety set}. 
For instance, robots must be maneuvered in complicated previously-unseen environments without collisions, and phases and voltages in power system must be maintained within pre-defined bands to avoid blackouts. Controllers often ensure safety using reliable models of the system and environment. Models are required to extrapolate the behaviour of the system given the current state and the designed input sequences. Models are however subject to unknown uncertainties or might even be entirely unknown. Irrespective of the accuracy of the model in laboratory conditions, unknown or varying environmental factors, such as slippage and wind, can render the model uncertain. When facing  uncertainties, we can consider their worst-case magnitude to ensure safety robustly. However, robust safety can result in conservative controllers. Alternatively, we can utilize real-time data to ``learn'' representations or models of the uncertainty. Thus we must ensure safety based on inaccurate time-varying models that fit the data on the fly. This is the topic of the current paper. 

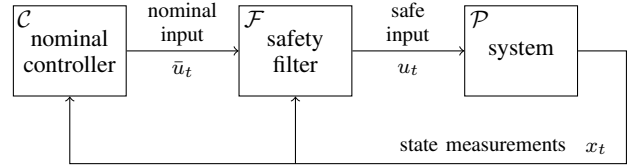
\begin{figure}[t]
	\centering
	\begin{tikzpicture}
		\node[rectangle,draw,minimum width=1.5cm,minimum height=1.2cm] (1) at (0,0) {};
		\node[]  at (0,0) {\small system};
		\node[rectangle,draw,minimum width=1.5cm,minimum height=1.2cm] (2) at (-3,0) {};
		\node[] at (-3,0) {\small 
			\begin{minipage}{1.4cm}
				\centering
				safety filter
			\end{minipage}};
		\node[rectangle,draw,minimum width=1.5cm,minimum height=1.2cm] (3) at (-6,0) {};
		\node[] at (-6,0) {\small 
			\begin{minipage}{1.7cm}
				\centering
				nominal controller
			\end{minipage}};
		\draw[->] (3) -- (2);
		\draw[->] (2) -- (1);
		\node[] at (-1.5,.35) {\footnotesize 
			\begin{minipage}{1.6cm}
				\centering
				safe \\ input
		\end{minipage}};
		\node[] at (-4.5,.35) {\footnotesize 
			\begin{minipage}{1.2cm}
				\centering
				nominal input
			\end{minipage}};
		\node[] at (-.5,-1.25) {\footnotesize 
			\begin{minipage}{2.5cm}
				\centering
				state measurements
		\end{minipage}};
		\node[] at (-0.55,.42) {\small $\mathcal{P}$};
		\node[] at (-3.55,.42) {\small $\mathcal{F}$};
		\node[] at (-6.6,.42) {\small $\mathcal{C}$};
		\node[] at (-4.5,-.25) {\footnotesize $\bar{u}_t$};
		\node[] at (-1.5,-.25) {\footnotesize $u_t$};
		\node[] at (1.,-1.3) {\footnotesize $x_t$};
		\draw[->] (1) -- (1.4,0) -- (1.4,-1.5) -- (-6,-1.5) -- (3);
		\draw[->] (-3,-1.5) -- (2);
	\end{tikzpicture}
	\caption{Safety filter $\mathcal{F}$ ensures satisfaction of state and control constraints for linear discrete-time system $\mathcal{P}$ with known model subject to zero-mean Gaussian uncertainty with unknown covariance. The safety filter minimally modifies the control input from nominal controller $\mathcal{C}$ via an optimization program to guarantee that the states in the next time step remain in the safe set with high probability. }
	\label{fig:safety_filter}
 \vspace{-8mm}
\end{figure}

In this paper, a learning-based \textit{safety filter} is developed for systems with unknown discrete-time linear time-invariant dynamics subject to a zero-mean Gaussian process noise with unknown covariance. The safety set is characterized using polytopic constraints on the states and control inputs. A block diagram of the closed-loop system using the safety filter is illustrated in Figure~\ref{fig:safety_filter}. The safety filter relies on two learning-based components: regression for learning the system model and empirical covariance of the process noise. The learned model and empirical process noise covariance, along with their confidence bounds, are used to construct a robust optimization problem for minimally modifying (in terms of a distance metric) nominal control actions to ensure safety with high probability. The nominal control can be generated by standard controllers, such as proportional-integral-derivative controllers, or by learning-based controllers, such as reinforcement learning. 
Finally, we propose directly optimizing the closed-loop performance by solving a model predictive control problem with tightened constraints instead of projecting nominal control inputs into the set of control signals to ensure safety. Similar to the projection-based safety filter, the magnitude of constraint tightening, which is dictated by the confidence in the learned models, is larger at the beginning since there is little information to construct reliable models, but shrinks with time as more data becomes available. \revise{
Note that individual elements of this paper, such as regularized least-squares learning of the model, empirical estimation of the covariance matrices of the process noise, and robust optimization for modifying control inputs, are traditionally investigated separately in the literature. The \textit{main} contributions of this paper are to combine these methods to develop a rigorous analysis of learning-based safety filters for unknown systems, and to develop computationally-efficient safety filters that are missing from the literature.}

Safe learning-based control, where the system model and uncertainties are unknown and must be estimated, has gained much attention recently~\cite{TaylorSingletaryYueAmes, ChengOroszMurrayBurdick,ChengKhojastehAmesBurdick,JagtapPappasZamani,ChoiCastanedaTomlinSreenath}. A popular approach is to learn models using Gaussian processes~\cite{ChengOroszMurrayBurdick, ChengKhojastehAmesBurdick,JagtapPappasZamani}. Also, there are many definitions for safety in reinforcement learning~\cite{garcia2015comprehensive}, but the approach of this paper relates more to reinforcement learning with constraints~\cite{marvi2021safe,fisac2018general,ChengOroszMurrayBurdick,fulton2018safe}. However, constraints in this paper are stage-wise as opposed to constraints on accumulated penalties over the planning horizon in constrained reinforcement learning. The above-mentioned studies share a common assumption that the learning of the models and uncertainties are done prior to control or that we can alternate between learning and control with batch learning~\cite{ChoiCastanedaTomlinSreenath, JagtapPappasZamani,ChengKhojastehAmesBurdick, ChengOroszMurrayBurdick,TaylorSingletaryYueAmes}. In contrast, our main interest is to perform learning and control simultaneously while new state measurements arrive, and to maintain safety based \revise{on} time-varying inaccurate models.

There are few studies that consider safety in simultaneous learning and control. The work of~\cite{DevonportYinArcak} uses confidence of learned Gaussian processes when modelling non-linear systems and environments to make decisions regarding safety based on the number of measurements used for learning. Although powerful, that work does not provide computationally efficient methods for ensuring safety, as their framework relies on Lyapunov functions, which can be difficult to find or compute for general systems. Another relevant study is~\cite{farokhi_safe_lerning}, which proposes computationally-efficient methods for projecting control signals into safe sets by computing the confidence of learning additive Gaussian models. However, that work only considers learning stochastic disturbances caused by the environment and assumes that the underlying model of the system is known. Most recently, a single-trajectory learning-based feedback scheme that ensures safety was proposed in~\cite{li2021safe}. In contrast to that study, the current paper does not focus on computing feedback functions but rather emphasizes constraint tightening for computing control actions using optimization problems. {The approach of this paper} results in a less complex for optimization problem; however, it requires sequentially solving optimization problems, which can be only done if only there is dedicated on-board computational capability. \revise{Similarly, learning-based optimal control over an infinite horizon was considered in~\cite{dean2019safely}. The emphasize in that paper was also on computing linear feedback policies using semi-definite programming. Safe learning for stochastic jump linear systems using semi-definite programming was considered in~\cite{schuurmans2019safe}. A myopic safety-constrained optimization was presented for water distribution networks in~\cite{9659138}. However, in that paper, constraint shrinking in response to learned model uncertainty was not considered.} Learning-based model predictive control with safety constraints have been proposed in~\cite{didier2021adaptive,lorenzen2019robust,wabersich2018linear}. These studies prove recursive feasibility and stability. However, computational issues, such as relying on polytopic sets for learning the model (with increasing numbers of polytopes or vertices with time), development of robust positively invariant sets, and requiring potentially high-dimensional parametric feedback functions, can stifle their implementation in practice. 

\revise{
The rest of the paper is organized as follows. First, the mathematical problem formulation is presented in Section~\ref{sec:safelearning}. Section~\ref{section:background} overviews confidence bounds for regularized least-squares learning of the system model. The data-driven safety filter is presented and analyzed in Section~\ref{sec:robustoptimization}. In Section~\ref{sec:learning_noise}, we reduce the conservatism of the safety filter by using the empirical covariance of the process noise in addition to learning the model parameters. Finally, Section~\ref{sec:conclusions} concludes the paper and presents directions for future research.

}

\subsubsection*{Notation}
Sets are denoted by calligraphic letters, such as $\mathcal{A}$. Matrices are denoted by capital Roman letters, such as $A$. The $i$-th row of $A$ is denoted by $A_i$. The entry in the $i$-th row and the $j$-th column of matrix $A$ is $a_{ij}$. Scalars and vectors are denoted by lowercase Roman and Greek letters, such as $x$ and $\theta$. Similarly, the $i$-th entry of vector $x$ is denoted by $x_i$. Let $\mathcal{S}_{++}^n$ and $\mathcal{S}_{+}^n$ refer to the sets of symmetric positive definite and positive semi-definite matrices in $\mathbb{R}^{n\times n}$. In what follows, $A \succ  B$ and $A \succeq B$, respectively, signify that $A-B\in \mathcal{S}_{++}^n$ and $A-B\in\mathcal{S}_{+}^n$. The smallest and the largest singular values of matrix $Y$ are, respectively, denoted by $\sigma_{\min}(Y)$ and $\sigma_{\max}(Y)$. Vector $e_i$ denotes the column-vector with all entries zero except the $i$-th entry, which is equal to one. For any $x\in\mathbb{R}^n$, $\|x\|$ denotes its Euclidean norm, i.e., $\|x\|=(\sum_{i=1}^n x_i^2)^{1/2}$. For any $A\in\mathbb{R}^{n\times m}$, $\|A\|$ denotes the induced matrix norm $\|A\|=\sup_{\|x\|=1}\|Ax\|$ and $\|A\|_F$ denotes the Frobenius norm $\|A\|_F=(\sum_{i=1}^n\sum_{j=1}^m a_{ij}^2)^{1/2}$. For any set $\mathcal{A}\subseteq\mathbb{R}^n$, $\radius(\mathcal{A})$ is its radius, i.e., $\radius(\mathcal{A})=\sup_{a\in\mathcal{A}} \|a\|$. For any signal $x[\cdot]$, $x[k_0\!:\!k_1]$ with $k_1\geq k_0$ denotes the sequence $(x[k_0],x[k_0+1],\dots,x[k_1])$. For $a,b\in\mathbb{R}^n$, $a\leq b$ signifies that the inequality holds entry-wise.

\section{Problem Formulation}\label{sec:safelearning}
Consider a linear time-invariant discrete-time system:
\begin{align} \label{eqn:system}
	x[k+1]=Ax[k]+Bu[k]+w[k],
\end{align}
where $x[k]\in\mathbb{R}^n$ is the state, $u[k]\in\mathbb{R}^m$ is the control input, and $w[k]\in\mathbb{R}^n$ is the process noise. The process noise is composed of a sequence of independently and identically distributed (i.i.d.) zero-mean Gaussian random variables with covariance $W\in\mathcal{S}_+^n$. \textit{Model parameters $A$, $B$, and $W$ are unknown and must be learned}. Safety is encoded by time-varying polytopic constraints:
\begin{align} \label{eqn:state_constraint}
	x[k]\in\mathcal{X}_k:=\{x\,|\, H[k]x\leq h[k]\}.
\end{align}
The control action is also constrained by 
\begin{align}
	u[k]\in\mathcal{U}:=\{u\,|\, Eu\leq f\}.
\end{align}
We make the following standing assumptions on covariance of the process noise, magnitude of the model parameters, and radii of the control and state constraint sets. 
\begin{assumption} \label{assum:all}
	There exists known constants:
	\begin{enumerate}[leftmargin=*,label={{\alph*:}}, ref={\theassumption.\alph*}]
		\item \label{assum:bound_on_variance}%
		$r>0$ such that $W\preceq rI$.
		\item \label{assum:bound_on_model}%
		$s>0$ such that 
		$\|[\begin{array}{cc}
			\!\!A & B\!\!
		\end{array}]\|_F\leq s$. 
		\item \label{assum:bound_on_radius}%
		$d>0$ such that $\radius(\mathcal{X}_k)+\radius(\mathcal{U})\leq d$.
	\end{enumerate}
\end{assumption}

When controlling a system with unknown model, the uncertainty of the learned model gets multiplied by the states and control inputs at the current time to determine the uncertainty of the state in the next time step; see~\eqref{eqn:def_vkk} and~\eqref{eqn:def_vk} below. Therefore, if the state and the control input are unbounded, the uncertainty of the state after making a decision can become large, which can complicate ensuring safety. Assumption~\ref{assum:bound_on_radius} ensures that the state and the control input are bounded so that we can avoid this problem. In practice, this assumption can be  relaxed. At the beginning when the uncertainty of the learned model is high, we can keep the states and the control actions restricted to small sets but, as our confidence in the learned model improves, we relax this assumption by gradually increasing the radii of the sets. Subsection~\ref{sec:limitations} presents another approach that partially relaxes Assumption~\ref{assum:bound_on_radius} and removes the need for requiring that the states remain within a bounded set with \textit{a priori} known radius for all times.

\begin{problem}
At $k\in\mathbb{N}\cup\{0\}$, given state measurements $x[0],\dots,x[k]$, find a procedure to \revise{compute a modified control input $u[k]\in\mathcal{U}$ based on a nominal control input $\bar{u}[k]$ by minimizing $\mathbf{d}(u[k],\bar{u}[k])$, where $\mathbf{d}(.,.)$ is a distance metric\footnote{An example of the distance metric is $\mathbf{d}(x,x')=\|x-x'\|$.},} \revise{subject to potentially tightened state and control constraints} to ensure the state in the next time step remains safe, i.e., $x[k+1]\in\mathcal{X}_{k+1}$, with high probability. 
\end{problem}

\section{Preliminary Results}
\label{section:background}

We use (regularized) least-squares to learn the model:
\begin{align} \label{eqn:least_squares}
	(\hat{A}[k],\hat{B}[k])\!\in\!\argmin_{(\bar{A},\bar{B})}\, &\Bigg[\sum_{t=0}^{k-1} \|x[t+1]\!-\!(\bar{A}x[t]\!+\!\bar{B}u[t])\|^2\nonumber\\
    &+\lambda(\|\bar{A}\|_F^2+\|\bar{B}\|_F^2)\Bigg],
\end{align}
where $\lambda>0$ is the regularization weight. Before we gather enough measurements, i.e., if $k<n(n+m)$, the least-squares problem~\eqref{eqn:least_squares} admits infinitely-many solutions without regularization, i.e., if $\lambda=0$. Regularization ensures that the least-squares problem~\eqref{eqn:least_squares} is strictly convex with a unique solution even in the absence of enough measurements. This also enables computing the confidence bounds for the learned model at all times. 

To analyze the safety filter, we need to better understand the moments of the random variable:
\begin{align}
    \revise{v[k',k]}:=(A-\hat{A}[k])x[k']+(B-&\hat{B}[k])u[k']\revise{,}\nonumber\\&\revise{\forall k'\geq k\geq 0.}\label{eqn:def_vkk}
\end{align}
\revise{Note that we can rewrite the system dynamics in~\eqref{eqn:system} as
\begin{align} \label{eqn:relation_x_and_v}
    x[k'+1]=\hat{A}[k]x[k']+\hat{B}[k]u[k']+\revise{v[k',k]}+w[k'].
\end{align}
Therefore, the random variable $\revise{v[k',k]}$ captures the error of forecasting the state at time $k'+1$, i.e., $x[k'+1]$, by  using the learned model based on the measurements up to time $k$, i.e., $(\hat{A}[k],\hat{B}[k])$.}
When $k'=k$, with slight abuse of notation, we write 
\begin{align}
    v[k]:= \revise{v[k,k]}
    =(A-\hat{A}[k])x[k]+(B-\hat{B}[k])u[k].\label{eqn:def_vk}
\end{align}

\begin{proposition} \label{cor:confidence} If $x[k']\in\mathcal{X}_{k'}$ and $u[k']\in\mathcal{U}$, then
	\begin{align*}
		\mathbb{P}\left\{\|\revise{v[k',k]}\|
		\leq \zeta n\beta_k({\delta}/{n})\right\}
		\geq 1-\delta,\quad \forall \revise{k'\geq k\geq 0},
	\end{align*}
	where $\zeta:=d/{\sqrt{\sigma_{\min}(V[k])}}$ and 
 \begin{align} \label{eqn:delta_def}
	\beta_k(\delta):= r\sqrt{2\log\left({\det(V[k])^{1/2}}/({\lambda^{n/2}\delta})\right)}+\lambda^{1/2}s
\end{align}
with $V[k]:=\lambda I+\hat{V}[k]$ and
\begin{align*}
	\hat{V}[k]\!:=\!\begin{bmatrix}
		x[0]^\top & u[0]^\top \\
		x[1]^\top & u[1]^\top \\
		\vdots & \vdots \\
		x[k\!-\!1]^\top & u[k\!-\!1]^\top 
	\end{bmatrix}^{\!\!\top} \!\!\!\begin{bmatrix}
		x[0]^\top & u[0]^\top \\
		x[1]^\top & u[1]^\top \\
		\vdots & \vdots \\
		x[k\!-\!1]^\top & u[k\!-\!1]^\top 
	\end{bmatrix}\!\!.
\end{align*}
\end{proposition}

\ifdefined\SHORTVERSION
\begin{proof}
See \cite[Appendix II]{farokhi_pre_print}.
\end{proof}
\fi 

\ifdefined\LONGVERSION
\begin{proof}
    See Appendix~\ref{proof:cor:confidence}.
\end{proof}
\fi 

Before presenting the following result, we need to define persistence of excitation, which is a common assumption in system identification and adaptive control~\cite{sastry2011adaptive}.

\begin{definition}[Persistence of Excitation] The system in \eqref{eqn:system} is persistently excited if there exists constants $\gamma\geq \alpha>0$ and an integer $T_0>0$ such that, $\forall k\in\mathbb{N}\cup\{0\}$,
	\begin{align*}
		\alpha I \preceq
		\begin{bmatrix} 
			\displaystyle\sum_{t=k}^{k+T_0-1} x[t]x[t]^\top &
			\displaystyle\sum_{t=k}^{k+T_0-1} x[t]u[t]^\top\\[.5em]
			\displaystyle\sum_{t=k}^{k+T_0-1} u[t]x[t]^\top &
			\displaystyle\sum_{t=k}^{k+T_0-1} u[t]u[t]^\top 
		\end{bmatrix}
		\preceq \gamma I.
	\end{align*}
\end{definition}

\begin{proposition} \label{cor:confidence_PoE} 
	If $x[k'],x[k'']\in\mathcal{X}_{k'}$, $u[k'],u[k'']\in\mathcal{U}$, and the persistence of excitation holds, then
	\begin{align*}
		\mathbb{P}\left\{\!\sqrt{\|\revise{v[k',k]}\|\;\|\revise{v[k'',k]}\|}
		\leq \zeta'_k n\beta_k({\delta}/{n})\!\right\}
		\geq &1-\delta,\\ &\forall \revise{k'\geq k\geq 0},
	\end{align*}
	where $\zeta'_k:=d/\sqrt{\lfloor k/T_0 \rfloor \alpha +\lambda}$.
\end{proposition}

\ifdefined\SHORTVERSION
\begin{proof}
See \cite[Appendix III]{farokhi_pre_print}.
\end{proof}
\fi 

\ifdefined\LONGVERSION
\begin{proof}
See Appendix~\ref{proof:cor:confidence_PoE}.
\end{proof}
\fi 

\begin{proposition} \label{prop:upperbound_expectation}
	Assume that $n\geq 2$. If $x[k'],x[k'']\in\mathcal{X}_{k'}$, $u[k'],u[k'']\in\mathcal{U}$, and the persistence of excitation holds, then
	\begin{align*}
		\mathbb{E}\{\|\revise{v[k',k]}\|^2\}&\leq L_2(k),\quad \revise{\forall k',k''\geq k\geq 0}\\
		\mathbb{E}\{\|\revise{v[k',k]}\|^2\|\revise{v[k'',k]}\|^2\}&\leq L_4(k),\quad \revise{\forall k',k''\geq k\geq 0}
	\end{align*}
	where
	\begin{align*}
		L_2(k)
		\!:=&\frac{\lambda s^2d^2n^2}{\lfloor k/T_0 \rfloor \alpha +\lambda}\\
            &+\frac{((\lfloor k/T_0 \rfloor + 1) \gamma +\lambda)}{\lambda}\frac{rd^2n^{\frac{7}{2}}(
			rn^{\frac{1}{2}}+
			\sqrt{\pi\lambda}s )}{(\lfloor k/T_0 \rfloor \alpha +\lambda)},\\
		L_4(k) 
        \!:=&\frac{\lambda^2 s^4d^4n^4}{(\lfloor k/T_0 \rfloor \alpha +\lambda)^2}\\
            &\!+\!\frac{((\lfloor k/T_0 \rfloor \!+\! 1) \gamma \!+\!\lambda)}{\lambda}\frac{8rd^4n^{\frac{11}{2}}(r^3n^{\frac{3}{2}}\!+\!
			\sqrt{\pi}\lambda^{\frac{3}{2}}s^3)}{(\lfloor k/T_0 \rfloor \alpha \!+\!\lambda)^2}.
	\end{align*}
	Evidently, $L_2(k)=\mathcal{O}(1)$ and $L_4(k)=\mathcal{O}(1/k)$. 
\end{proposition}

\ifdefined\SHORTVERSION
\begin{proof}
See \cite[Appendix IV]{farokhi_pre_print}.
\end{proof}
\fi 

\ifdefined\LONGVERSION
\begin{proof}
See Appendix~\ref{proof:prop:upperbound_expectation}.
\end{proof}
\fi 
With these preliminary results in hand, we are ready to investigate the effectiveness of the safety filter.

\section{Data-Driven Safety Filter}
\label{sec:robustoptimization}
In this paper, we modify a nominal control input  $\bar{u}[k]$ at each iteration to ensure safety. Projection of the control action $\bar{u}[k]$ to a safe set can be done by solving:
\revise{
\begin{subequations}
		\label{eqn:optim_safe_tightened}
		\begin{align}
			u[k]\in&\argmin_{u\in\mathcal{U}}  \mathbf{d}(u,\bar{u}[k]),\\
			&\quad\, \mathrm{s.t.}\quad H[k+1](\hat{A}[k]x[k] + \hat{B}[k] u)\nonumber\\
            &\hspace{1.2in}\leq h[k+1]-\bar{e}[k+1],
		\end{align}
	\end{subequations}
	where}
	\begin{align} \revise{\label{eqn:definition_bar_e_Prop_4}
		\bar{e}_i[k+1]=\left(\!\frac{\displaystyle dn\beta_k\!\left(\frac{\delta}{2n}\right)}{\sqrt{\sigma_{\min}(V[k])}} \!+\!\sqrt{\frac{2rn}{\delta}}\right)\|H_i[k\!+\!1]^\top\!\|,}
	\end{align}
 and $\delta\in(0,1)$ is a design parameter determining the probability of violating the safety constraints, $w\in\mathbb{R}^n$ is an uncertainty term linked with the process noise, and $v\in\mathbb{R}^n$ is an uncertainty term linked with the (in)accuracy of the learned model. 

\revise{
\begin{theorem}
	\label{tho:safe_projection}
	Assume that problem~\eqref{eqn:optim_safe_tightened} is feasible. Then, by implementing the control action $u[k]$ extracted from~\eqref{eqn:optim_safe_tightened}, $\mathbb{P}\{ x[k+1]\in\mathcal{X}_{k+1}\}\geq 1-\delta$.
\end{theorem}
}

\begin{proof} See Appendix~\ref{proof:tho:safe_projection}.
\end{proof}

The constraint-tightening term in~\eqref{eqn:optim_safe_tightened} is composed of two independent terms: one is caused by the uncertainty of the learned model and the other stems from the process noise. We can show that the constraint-tightening term due to the uncertainty of the learned model goes to zero under persistence of excitation.

\subsection{Persistence of Excitation for Safety Filter}
Persistence of excitation is a common assumption in system identification and adaptive control, which ensures that the error of learning the model converges to zero almost surely as more samples are gathered. This is done by exciting the system along all directions.

\begin{proposition} \label{prop:PoE_model}
	Assume that $\|H_i[k+1]^\top\|$ is uniformly bounded and system \eqref{eqn:system} is persistently excited. Then, 
	$$
	\lim_{k\rightarrow \infty} ({dn}/{\sqrt{\sigma_{\min}(V[k])}}) \beta_k\left({\delta}/({2n})\right)\|H_i[k+1]^\top\|=0.
	$$
\end{proposition}

\ifdefined\SHORTVERSION
\begin{proof}
See \cite[Appendix VI]{farokhi_pre_print}.
\end{proof}
\fi 

\ifdefined\LONGVERSION
\begin{proof}
See Appendix~\ref{proof:prop:PoE_model}. 
\end{proof}
\fi

Proposition~\ref{prop:PoE_model} shows that, assuming persistence of excitation, the effect of the uncertainty caused by learning the model in the constraint tightening of\revise{~\eqref{eqn:definition_bar_e_Prop_4}} tends to zero as more measurements are gathered. Therefore, in the large $k$ regime, we can solve~\eqref{eqn:optim_safe_tightened} with 
$\bar{e}_i[k+1]=\sqrt{{2rn}/{\delta}}\|H_i[k+1]^\top\|.$
The remaining constraint tightening term in this optimization problem is caused by the process noise. Note that, because we have not attempted at learning the statistics of the process noise, we consider the worst-case scenario in light of Assumption~\ref{assum:bound_on_variance}. After recovering the model parameters, the techniques of \cite{farokhi_safe_lerning} can be used to learn the statistics of the noise and also shrink this term. This is formalized in Section~\ref{sec:learning_noise}. 

\subsection{Conservatism in Constraint Tightening}
\label{sec:limitations}
In~\revise{\eqref{eqn:optim_safe_tightened}}, the worst-case magnitude of the uncertainty term $v$, linked to the inaccuracy of the learned model, scales quadratically with $d$, which is an upper bound on the radii of $\mathcal{X}_k$ and $\mathcal{U}$. This is because the model uncertainty gets multiplied by the state and the control input, and can result in conservative behaviour when $\mathcal{X}_k$ and $\mathcal{U}$ are large sets. Furthermore, according to Assumption~\ref{assum:bound_on_radius}, we need to assume existence of a bounded set to which $x[k]$ belongs for all $k$. These factors can combine to increase the conservatism of the projection-based approach. By examining the steps of the proof \revise{of Proposition~\ref{cor:confidence}, which is used to prove Theorem~\ref{tho:safe_projection}}, we can show that 
$
\mathbb{P}\{\|v[k]\|^2
\leq {n^2(\|x[k]\|^2+\radius(\mathcal{U})^2)}\beta_k^2({\delta}/({2n}))/{\sigma_{\min}(V[k])} \}
\geq 1-{\delta}/{2}.
$
Therefore, we can relax~\revise{\eqref{eqn:optim_safe_tightened}} to
\begin{subequations} \label{eqn:optim_safe_1}
	\begin{align} 
		u[k]\!\in&\argmin_{u\in\mathcal{U}}  \mathbf{d}(u,\bar{u}[k]),\\
		&\quad\, \mathrm{s.t.}  \quad \revise{H[k+1](\hat{A}[k]x[k] + \hat{B}[k] u )}\nonumber\\
        &\hspace{1.2in}\revise{\leq \!h[k+1]-\hat{e}[k+1],}
	\end{align}
\end{subequations}
\revise{where}
\begin{align} 
    \revise{
		\hat{e}_i[k+1]=\Bigg(} &\revise{\frac{\displaystyle n\sqrt{\|x[k]\|^2+\radius(\mathcal{U})^2}}{\sqrt{\sigma_{\min}(V[k])}}\beta_k\left(\frac{\delta}{2n}\right)} \nonumber\\
        &\revise{+\sqrt{\frac{2rn}{\delta}}\Bigg)\|H_i[k+1]^\top\|.}
	\end{align}
\revise{Similarly, it can be proved} that, by implementing the control action $u[k]$ extracted from the optimization problem\revise{~\eqref{eqn:optim_safe_1}, if feasible}, $x[k+1]$ is safe with probability of at least $1-\delta$.
This clearly yields an improved performance because $\|x[k]\|^2+\radius(\mathcal{U})^2\leq (\|x[k]\|+\radius(\mathcal{U}))^2 \leq  d^2$ for all $k$ due to Assumption~\ref{assum:bound_on_radius}. Furthermore, we do not need to assume \textit{a priori} knowledge of $\sup_{k\geq 0}\|x[k]\|$. 

\subsection{Combining Controller and Safety Filter}
Instead of projecting nominal control inputs into the set of control signals that ensure the safety of the system, we can directly optimize the closed-loop performance by solving:
\begin{subequations} \label{eqn:optim_safe_MPC}
	\begin{align}
        \argmin_{\scriptsize
			\begin{array}{c}
				\bar{u}[k\!:\!k\!+\!T\!-\!1] \\
				\bar{x}[k\!+\!1\!:\!k\!+\!T]
			\end{array}
		}  &\!\!\!\!\sum_{t=k}^{k+T-1}\!\!\!\bar{u}[t]^\top R_t \bar{u}[t]\!+\!\!\!\!\sum_{t=k+1}^{k+T} \!\!\!(\bar{x}[t]^\top Q_t\bar{x}[t]\!+\!q_t^\top \bar{x}[t]),\\
		\mathrm{s.t.}\hspace{7mm}& \bar{u}[k\!:\!k\!+\!T\!-\!1]\!\in\!\mathcal{U}^T \\
		& \bar{x}[t+1]=\hat{A}[t]\bar{x}[t] + \hat{B}[t] \bar{u}[t],\nonumber\\
        &\hspace{.5in}\forall t\in\{k,\dots,k+T-1\},\\
		&\bar{x}[k]=x[k],\\
		&H[k\!+\!1]\bar{x}[k\!+\!1]\!\leq\! h[k\!+\!1]\!-\!\bar{e}[k\!+\!1],\\
		&\bar{x}[t]\in\mathcal{X}_t,  \forall t\in\{k+2,\dots,k+T\},
	\end{align}
\end{subequations}
where $T\in\mathbb{N}$ denotes the decision making horizon, \revise{$\mathcal{U}^T$ denotes the $T$-fold Cartesian product of the set $\mathcal{U}$}, $\bar{e}[k+1]$ is defined in\revise{~\eqref{eqn:definition_bar_e_Prop_4}}, and $R_t\in\mathcal{S}_{++}^m$, $Q_t\in\mathcal{S}_{+}^n$, and $q_t\in\mathbb{R}^n$ are the parameters of the cost function. This optimization problem is similar to the one solved in model predictive control~\cite{rawlings2009model}, with the exception that the safety constraints on the state for the next time step, i.e.,  $x[k+1]$, is tightened to ensure safety despite modelling uncertainty and process noise. Note that other safety constraints can be tightened following a similar line of reasoning; however, the conservatism increases for them as new measurements are not available or taken into consideration for shrinking the magnitude of the constraint tightening. Assuming that problem~\eqref{eqn:optim_safe_MPC} is feasible and, by implementing the control action $u[k]$ from the solution $u[k\!:\!k\!+\!H\!-\!1]$ of~\eqref{eqn:optim_safe_MPC}, $x[k+1]$ is safe with probability of at least $1-\delta$.

One positive aspect of the model predictive control formulation, as opposed to instantaneous or myopic projection of nominal control actions to ensure safety, is that the optimization problem is more likely to remain feasible. For instance, in obstacle avoidance, model predictive control looks ahead to avoid future states that can cause infeasibility down the track. However, this comes at the cost of an increased computational burden because of the longer horizon and increased dimension. An important direction for future research is to establish recursive feasibility of the proposed learning-based model predictive control, i.e., establishing conditions under which, if~\eqref{eqn:optim_safe_MPC} is feasible at time $k$, it is also feasible at time $k+1$. To be able to establish recursive feasibility, we need to prove that the uncertainty sets for the model matrices and the covariance matrix are recursively contained, i.e., access to more measurements does not increase uncertainty in some directions. Furthermore, we must search over the set of feedback policies rather than control inputs. Given these properties in addition to a robust positively invariant safe set, we can use standard recursive feasibility arguments from robust model predictive control. These requirements however can limit the computationally-friendly nature of the constraint-tightening projection-based approach in this paper. 

\section{Learning of Process Noise Covariance}\label{sec:learning_noise}
In this section, the covariance of the process noise is estimated empirically to reduce the conservatism of working with only the upper bound  in Assumption~\ref{assum:bound_on_variance}. In particular, we use the empirical covariance of the process noise:
\begin{align*}
	\revise{\widehat{W}[k,k_0]}=\frac{1}{k-k_0}\sum_{t=k_0+1}^k \hat{w}[t|k_0]\hat{w}[t|k_0]^\top,
\end{align*}
where $\revise{\hat{w}[k,k_0]}:=x[k+1]-(\hat{A}[k_0]x[k]+\hat{B}[k_0]u[k])$. For all $k>k_0\geq 0$, we ensure safety by projecting the control action $\bar{u}[k]$ using
\revise{
\begin{subequations} \label{eqn:optim_safe_tightened_with_W}
		\begin{align}
			u[k]\in &\argmin_{u\in\mathcal{U}}  \mathbf{d}(u,\bar{u}[k]),\\
			&\quad\, \mathrm{s.t.}  \quad H[k+1](\hat{A}[k_0]x[k] + \hat{B}[k_0] u)\nonumber\\
                &\hspace{1.2in}\leq h[k+1]-\tilde{e}[k+1],
		\end{align}
	\end{subequations}}
where 
\revise{
\begin{align}
		\tilde{e}_i[k+1]=&\frac{dn}{\sqrt{\sigma_{\min}(V[k])}} \beta_k\left(\frac{\delta}{3n}\right)\|H_i[k+1]^\top\|\nonumber\\
        &+\sqrt{\frac{3n}{\delta}}\left\|\Pi_{k,k_0}^{1/2}H_i[k+1]^\top\right\|,
	\end{align}
}and
\begin{align*}
	\Pi_{k,k_0}^{-1}:=&\revise{\widehat{W}[k-1,k_0]}\\
 &+\sqrt{\frac{3}{\delta}\left(\!2L_4(k_0)^2\!+\!\frac{8rL_2(k_0)}{k-k_0}\!+\!\frac{2r^2n(n\!+\!1)}{k-k_0}\right)} I.
\end{align*}

\begin{theorem} \label{tho:safe_with_W}
	Assume that problem~\eqref{eqn:optim_safe_tightened_with_W} is feasible and $n\geq 2$. Then, by implementing the control action $u[k]$ extracted from~\eqref{eqn:optim_safe_tightened_with_W}, $\mathbb{P}\{ x[k+1]\in\mathcal{X}_{k+1}\}\geq 1-\delta$.
\end{theorem}

\ifdefined\SHORTVERSION
\begin{proof}
See \cite[Appendix VII]{farokhi_pre_print}.
\end{proof}
\fi 

\ifdefined\LONGVERSION
\begin{proof}
    See Appendix~\ref{proof:tho:safe_with_W}.
\end{proof}
\fi

\revise{
\begin{remark}
The need for the assumption $n \geq  2$ in Theorem~\ref{tho:safe_with_W} arises from an inequality (i.e., $\delta^{n/2}\leq \delta$ for $\delta\in(0,1)$ and $n\geq 2$) used to prove Proposition~\ref{prop:upperbound_expectation}. Although this assumption seems to be technical, we have not been able to relax it. 
\end{remark}

\begin{remark}
    By increasing $k_0$, $L_4(k_0)$ decreases, which can potentially reduce the constraint tightening term. This is because, by increasing $k_0$, the accuracy of the learned model improves. However, by increasing $k_0$, $k-k_0$ gets smaller, which can potentially increase the constraint tightening. This trade-off stems from the fact that only $k_0$ measurements are used to learn the model parameters $(\hat{A}[k_0],\hat{B}[k_0])$ (so by increasing $k_0$ the learned model becomes more accurate) while the remaining $k-k_0$ measurements are used to empirically estimate the covariance of the process noise (so by increasing $k_0$ the empirical covariance becomes less reliable). This fundamental trade-off cannot be avoided unless the entire set of measurements are used to simultaneously learn the model parameters and estimate the covariance of the process noise. However, this approach complicates the proofs significantly and worsens the tightness of the bounds by generating extra cross-correlation terms. This is a trade-off that must be considered when choosing $k_0$. 
\end{remark}
}

\section{Conclusions} \label{sec:conclusions}
We considered safe learning-based control for discrete-time linear time-invariant  dynamical systems when the system model and the process noise covariance are unknown but bounded. We used regularized least-squares estimation to learn the model online and used the empirical covariance of the noise. We relied on the confidence bounds of the learned system model and the empirical process noise covariance to modify the control inputs via a robust optimization problem with time-varying safety constraints. We reformulated the problem in a computationally-friendly optimization problem for ensuring safety based on constraint tightening. Future work can focus on noisy output measurements and learning nonlinear systems using Gaussian processes.

\bibliography{ref}
\bibliographystyle{ieeetr}

\appendices

\ifdefined\SHORTVERSION
\section{Useful Lemma}
\begin{lemma} \label{lemma:robust} 
	For $W\succeq 0$ and $d\geq 0$, 
	$
	\{u\,|\,a^\top u+b^\top w \leq c,\forall w: w^\top W w\leq d \} =\{u\,|\,a^\top u\leq c-\sqrt{d} \|W^{-1/2}b\|\}
	$.
\end{lemma}

\begin{proof}
	With the change of variables $\bar{w}=W^{1/2}w$ and $\bar{b}=W^{-1/2} b$, we have 
	$
	\{u\,|\,a^\top u+b^\top w \leq c,\forall w: w^\top W w\leq d \}
	=
	\{u\,|\,a^\top u+\bar{b}^\top\bar{w} \leq c,\forall \bar{w}: \bar{w}^\top \bar{w}\leq d \}.
	$
	Then, following the approach of~\cite[Example~1.3.3]{ben2009robust}, we can obtain $\{u\,|\,a^\top u+\bar{b}^\top\bar{w} \leq c,\forall \bar{w}: \bar{w}^\top \bar{w}\leq d \}=\{u\,|\,\sqrt{d}\|\bar{b}\| \leq c-a^\top u \}$. 
\end{proof}
\fi 

\ifdefined\LONGVERSION

\section{Useful Lemmas}
We build on the results of~\cite{abbasi2011improved}. Define
\begin{align*}
	\theta_i :=
	\begin{bmatrix}
		A_i^\top \\ B_i^\top 
	\end{bmatrix},
\end{align*}
where $A_i$ and $B_i$ are, respectively, the $i$-th rows of matrices $A$ and $B$. Similarly, let $x_i$ and $w_i$ denote the $i$-th entries of vectors $x$ and $w$. We can rearrange the system dynamics in~\eqref{eqn:system} to obtain
\begin{align*}
	X_i[k]
	=
	Z[k]
	\theta_i+
	W_i[k],
\end{align*}
where
\begin{align*}
	X_i[k]=\begin{bmatrix}
		x_i[1] \\
		x_i[2] \\
		\vdots \\
		x_i[k]
	\end{bmatrix},\qquad 
	W_i[k]=\begin{bmatrix}
		w_i[0] \\
		w_i[1] \\
		\vdots \\
		w_i[k-1]
	\end{bmatrix},
\end{align*}
and
\begin{align*}
    Z[k]=\begin{bmatrix}
		x[0]^\top & u[0]^\top \\
		x[1]^\top & u[1]^\top \\
		\vdots & \vdots \\
		x[k-1]^\top & u[k-1]^\top 
	\end{bmatrix}.
\end{align*}
The regularized least-squares estimate of $\theta_i$ is given by
\begin{align} \label{eqn:least_squares_smaller}
	\hat{\theta}_i[k] \in \argmin_{\bar{\theta}_i\in \mathbb{R}^{n+m}} \left[\|X_i[k]- Z[k]\bar{\theta}_i\|^2+\lambda \|\bar{\theta}_i\|^2\right].
\end{align}
The solution to this regularized least-squares problem is given by
\begin{align}
	\hat{\theta}_i[k]:=(Z[k]^\top Z[k]+\lambda I)^{-1}Z[k]^\top X_i[k].
\end{align}
The regularized estimates can be concatenated to get the learned model:
\begin{align}
	\begin{bmatrix}
		\hat{A}[k] & \hat{B}[k]
	\end{bmatrix}
	:=
	\begin{bmatrix}
		\hat{\theta}_1[k]^\top 
		\\
		\vdots \\
		\hat{\theta}_n[k]^\top 
	\end{bmatrix}.
\end{align}

\begin{lemma} \label{tho:confidence} Let $V[k]=Z[k]^\top Z[k]+\lambda I$. Then,
$\mathbb{P} \{\|V[k]^{1/2}(\hat{\theta}_i[k]-\theta_i)\|
\leq \beta_k(\delta/n),\forall i\}\geq 1-\delta,$
where $\beta_k(\delta)$ is defined in~\eqref{eqn:delta_def}.
\end{lemma}

\begin{proof}
    First, note that $\mathbb{E}\{\exp(\mu w_i[k])\}
	=\exp(\mu^2 \mathbb{E}\{w_i[k]^2\}/2)\leq\exp(\mu^2 r/2),$ 	where the equality follows from that $w_i[k]$ is a zero mean Gaussian random variable and the inequality follows from Assumption~\ref{assum:bound_on_variance}.
	Further, Assumption~\ref{assum:bound_on_model} implies that $\|\theta_i\|\leq \|[\!\!\begin{array}{cc} A & B\end{array}\!\!]\|_F\leq s$.
	Now, using~\cite[Theorem 2]{abbasi2011improved}, we get
	\begin{align*}
		\mathbb{P}
	\Bigg\{\|V[k]^{1/2}&(\hat{\theta}_i[k]-\theta_i)\|
		\leq \beta_k(\delta/n)\Bigg\}\geq 1-\frac{\delta}{n}.
	\end{align*}
	Then
	\begin{align*}
		\mathbb{P}
	\Bigg\{\|&V[k]^{1/2}(\hat{\theta}_i[k]-\theta_i)\|
		\leq \beta_k(\delta/n),\forall i\Bigg\} \\
		=&\mathbb{P}
		\Bigg\{\bigwedge_{i=1}^n \|V[k]^{1/2}(\hat{\theta}_i[k]-\theta_i)\|
		\leq \beta_k(\delta/n)\Bigg\}\\
		\\
		=&1-\mathbb{P}
		\Bigg\{\bigvee_{i=1}^n \|V[k]^{1/2}(\hat{\theta}_i[k]-\theta_i)\|
		> \beta_k(\delta/n)\Bigg\}\\
		\geq & 1-\sum_{i=1}^n\mathbb{P}
		\Bigg\{\|V[k]^{1/2}(\hat{\theta}_i[k]-\theta_i)\|
		> \beta_k(\delta/n)\Bigg\}\\
		=&1-\delta.
	\end{align*}
	This concludes the proof.
\end{proof}

The following lemmas are used in proving the results in this paper. 

\label{appendix:useful}
\begin{lemma} \label{lemma:sqrt} For $x_i>0, \forall i$, $\sqrt{x_1+\cdots +x_n}\leq \sqrt{x_1} +\cdots + \sqrt{x_n}.$
\end{lemma}

\begin{proof} For $x_i>0$, $i\in\{1,\dots,n\}$, we have $\sum_{i=1}^n x_i \leq \sum_{i=1}^n x_i+2\sum_{i=1}^n \sum_{j=1,\\ i\neq j}^n \sqrt{x_i x_j}=(\sqrt{x_1}+\dots+\sqrt{x_n})^2$.\end{proof}

\begin{lemma} \label{lemma:Forb_2norm} Let 
	$X=\begin{bmatrix}
			X_1^\top  & \dots & X_n^\top 
		\end{bmatrix}^\top.$
	Then, for any positive semi-definite matrix $Y$, 
	$\|XY\|_F\leq \sum_{i=1}^n \|Y(X_i)^\top\|.$
\end{lemma}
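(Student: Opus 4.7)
The plan is to reduce the statement to Lemma~\ref{lemma:sqrt} by expanding the Frobenius norm in terms of row-wise $2$-norms. First I would write
\[
\|XY\|_F^2 = \sum_{i=1}^n \|X_i Y\|_2^2,
\]
using the standard identity that the squared Frobenius norm of a matrix equals the sum of the squared $2$-norms of its rows, with the $i$-th row of $XY$ being exactly $X_i Y$.

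Next I would exploit the hypothesis that $Y$ is positive semi-definite, hence symmetric, to move the transpose through: $\|X_i Y\|_2 = \|(X_i Y)^\top\|_2 = \|Y^\top X_i^\top\|_2 = \|Y X_i^\top\|_2$. Substituting gives
\[
\|XY\|_F^2 = \sum_{i=1}^n \|Y X_i^\top\|_2^2.
\]

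Finally I would apply Lemma~\ref{lemma:sqrt} with $x_i = \|Y X_i^\top\|_2^2$ (which are nonnegative, with the inequality extending to the case $x_i = 0$ by a trivial continuity/omission argument) to conclude
\[
\|XY\|_F = \sqrt{\sum_{i=1}^n \|Y X_i^\top\|_2^2} \leq \sum_{i=1}^n \|Y X_i^\top\|_2,
\]
which is the claimed bound.

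There is no real obstacle here; the only subtlety worth flagging is the use of symmetry of $Y$, which is precisely why the positive semi-definiteness hypothesis (as opposed to an arbitrary square matrix) is invoked in the statement. Everything else is a direct application of the elementary sub-root inequality proved just above in Lemma~\ref{lemma:sqrt}.
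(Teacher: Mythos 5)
Your proof is correct and follows essentially the same route as the paper: expand $\|XY\|_F^2$ as the sum of squared row norms, use symmetry of $Y$ to rewrite each term as $\|YX_i^\top\|_2^2$, and apply Lemma~\ref{lemma:sqrt}. You are in fact slightly more careful than the paper in making explicit that the transpose step relies on $Y$ being symmetric.
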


\begin{proof}
	The definition of Frobenius norm results in $\|X\|_F^2=\sum_{i=1}^n \|X_i^\top\|^2$. Furthermore, 
	\begin{align*}
		XY=
		\begin{bmatrix}
			X_1 Y\\
			\vdots \\
			X_n Y
		\end{bmatrix}.
	\end{align*}
	Therefore, 
	\begin{align*}
		\|XY\|_F^2
		&=\sum_{i=1}^n \|(X_iY)^\top\|^2=\sum_{i=1}^n \|Y(X_i)^\top\|^2.
	\end{align*}
	Finally, using Lemma~\ref{lemma:sqrt}, we get
	\begin{align*}
		\|XY\|_F
		=\sqrt{\sum_{i=1}^n \|Y(X_i)^\top\|^2}&\leq\sum_{i=1}^n \sqrt{\|Y(X_i)^\top\|^2} \\&= \sum_{i=1}^n \|Y(X_i)^\top\|.
	\end{align*}
	This concludes the proof.
\end{proof}

\begin{lemma} \label{lemma:robust} 
	For $W\succeq 0$ and $d\geq 0$, 
	$
	\{u\,|\,a^\top u+b^\top w \leq c,\forall w: w^\top W w\leq d \} =\{u\,|\,a^\top u\leq c-\sqrt{d} \|W^{-1/2}b\|\}
	$.
\end{lemma}

\begin{proof}
	With the change of variables $\bar{w}=W^{1/2}w$ and $\bar{b}=W^{-1/2} b$, we have 
	$
	\{u\,|\,a^\top u+b^\top w \leq c,\forall w: w^\top W w\leq d \}
	=
	\{u\,|\,a^\top u+\bar{b}^\top\bar{w} \leq c,\forall \bar{w}: \bar{w}^\top \bar{w}\leq d \}.
	$
	Then, following the approach of~\cite[Example~1.3.3]{ben2009robust}, we can obtain $\{u\,|\,a^\top u+\bar{b}^\top\bar{w} \leq c,\forall \bar{w}: \bar{w}^\top \bar{w}\leq d \}=\{u\,|\,\sqrt{d}\|\bar{b}\| \leq c-a^\top u \}$. 
\end{proof}

\section{Proof of Proposition~\ref{cor:confidence}}
\label{proof:cor:confidence}
Lemma~\ref{tho:confidence} implies that
\begin{align*}
	\mathbb{P}
\Bigg\{\|V[k]^{1/2}&(\hat{\theta}_i[k]-\theta_i)\| 
	\leq \beta_k(\delta/n),\forall i\Bigg\}\geq 1-\delta,
\end{align*}
and, as a result,
\begin{align*}
	\mathbb{P}
\Bigg\{&\sum_{i=1}^n\|V[k]^{1/2}(\hat{\theta}_i[k]-\theta_i)\| 
	\leq n\beta_k(\delta/n)\Bigg\}\\
 &\geq \mathbb{P}
\Bigg\{\|V[k]^{1/2}(\hat{\theta}_i[k]-\theta_i)\| 
	\leq \beta_k(\delta/n),\forall i\Bigg\}\\&\geq 1-\delta.
\end{align*}
Now note that
\begin{align*}
	\left\|\begin{bmatrix}
\hat{\theta}_1[k]^\top-\theta_1^\top \\
		\vdots \\
		\hat{\theta}_n[k]^\top -\theta_n^\top
\end{bmatrix}V[k]^{1/2}\right\| 
&\leq 
	\left\|\begin{bmatrix}
\hat{\theta}_1[k]^\top-\theta_1^\top \\
		\vdots \\
		\hat{\theta}_n[k]^\top -\theta_n^\top
	\end{bmatrix}V[k]^{1/2}\right\|_F\\
&\leq 
	\sum_{i=1}^n \|V[k]^{1/2}(\hat{\theta}_i[k] -\theta_i)\| ,
\end{align*}
where the first inequality follows from~\cite[Eq.~(3.241)]{gentle2007matrix} and the second inequality follows from Lemma~\ref{lemma:Forb_2norm}. Therefore, 
\begin{align*}
	\mathbb{P}
	&\left\{\left\|\begin{bmatrix}
		\hat{\theta}_1[k]^\top-\theta_1^\top \\
		\vdots \\
		\hat{\theta}_n[k]^\top -\theta_n^\top
	\end{bmatrix}V[k]^{1/2}\right\| \leq n\beta_k\left(\frac{\delta}{n}\right) \right\}\geq 1-\delta.
\end{align*}
Note that 
	\begin{align*}
	\revise{v[k',k]}=\begin{bmatrix}
		\theta_1^\top - \hat{\theta}_1[k]^\top\\
		\vdots \\
		\theta_n^\top - \hat{\theta}_n[k]^\top
	\end{bmatrix}
	\begin{bmatrix}
		x[k'] \\
		u[k']
	\end{bmatrix},
\end{align*}
and hence
\begin{align}
	\|\revise{v[k',k]}\|=&
	\left\|\begin{bmatrix}
		\hat{\theta}_1[k]^\top-\theta_1^\top \\
		\vdots \\
		\hat{\theta}_n[k]^\top -\theta_n^\top
	\end{bmatrix}
	\begin{bmatrix}
		x[k'] \\
		u[k']
	\end{bmatrix}    
	\right\| \nonumber\\
	=& \left\|\begin{bmatrix}
		\hat{\theta}_1[k]^\top-\theta_1^\top \\
		\vdots \\
		\hat{\theta}_n[k]^\top -\theta_n^\top
	\end{bmatrix}V[k]^{1/2}V[k]^{-1/2}
	\begin{bmatrix}
		x[k'] \\
		u[k']
	\end{bmatrix}    
	\right\| \nonumber\\
	\leq & \left\|\begin{bmatrix}
		\hat{\theta}_1[k]^\top-\theta_1^\top \\
		\vdots \\
		\hat{\theta}_n[k]^\top -\theta_n^\top
	\end{bmatrix}V[k]^{1/2}  
	\right\| \left\|V[k]^{-1/2}
	\begin{bmatrix}
		x[k'] \\
		u[k']
	\end{bmatrix}\right\| \label{eqn:middle_of_proof}\\
	\leq & \frac{1}{\sqrt{\sigma_{\min}(V[k])}}\left\|\begin{bmatrix}
		\hat{\theta}_1[k]^\top-\theta_1^\top \\
		\vdots \\
		\hat{\theta}_n[k]^\top -\theta_n^\top
	\end{bmatrix}V[k]^{1/2}  
	\right\| \nonumber\\
    &\hspace{.8in}\times\left\|
	\begin{bmatrix}
		x[k'] \\
		u[k']
	\end{bmatrix}\right\| \nonumber\\
	= & \frac{1}{\sqrt{\sigma_{\min}(V[k])}}\left\|\begin{bmatrix}
		\hat{\theta}_1[k]^\top-\theta_1^\top \\
		\vdots \\
		\hat{\theta}_n[k]^\top -\theta_n^\top
	\end{bmatrix}V[k]^{1/2}  
	\right\| \nonumber\\
    &\hspace{.8in}\times\sqrt{\|x[k']\|^2+\|u[k']\|^2}\label{eqn:middle_of_proof_the_other_one}\\
	\leq & \frac{1}{\sqrt{\sigma_{\min}(V[k])}}\left\|\begin{bmatrix}
		\hat{\theta}_1[k]^\top-\theta_1^\top \\
		\vdots \\
		\hat{\theta}_n[k]^\top -\theta_n^\top
	\end{bmatrix}V[k]^{1/2}  
	\right\| \nonumber\\
    &\hspace{.8in}\times\sqrt{\radius(\mathcal{X}_{k'})^2+\radius(\mathcal{U})^2}\nonumber\\
	\leq & \frac{1}{\sqrt{\sigma_{\min}(V[k])}}\left\|\begin{bmatrix}
		\hat{\theta}_1[k]^\top-\theta_1^\top \\
		\vdots \\
		\hat{\theta}_n[k]^\top -\theta_n^\top
	\end{bmatrix}V[k]^{1/2}  
	\right\| \nonumber\\
    &\hspace{.8in}\times(\radius(\mathcal{X}_{k'})+\radius(\mathcal{U})),\nonumber\\
	\leq & \frac{d}{\sqrt{\sigma_{\min}(V[k])}}\left\|\begin{bmatrix}
	\hat{\theta}_1[k]^\top-\theta_1^\top \\
	\vdots \\
	\hat{\theta}_n[k]^\top -\theta_n^\top
    \end{bmatrix}V[k]^{1/2}  
    \right\| ,\nonumber
\end{align}
and thus
\begin{align*}
	\mathbb{P}\Bigg\{&\|\revise{v[k',k]}\|
	\leq \zeta n\beta_k\left(\frac{\delta}{n}\right)\Bigg\}
	\\&\geq \mathbb{P}\left\{\left\|\begin{bmatrix}
		\hat{\theta}_1[k]^\top-\theta_1^\top \\
		\vdots \\
		\hat{\theta}_n[k]^\top -\theta_n^\top
	\end{bmatrix}V[k]^{1/2}\right\|\leq n\beta_k\left(\frac{\delta}{n}\right) \right\}
	\\&\geq 1-\delta.
\end{align*}
This concludes the proof.

\section{Proof of Proposition~\ref{cor:confidence_PoE}}
\label{proof:cor:confidence_PoE}
Following~\eqref{eqn:middle_of_proof}, we get
\begin{align}
	\|\revise{v[k',k]}\|
	\leq & \left\|\begin{bmatrix}
		\hat{\theta}_1[k]^\top-\theta_1^\top \\
		\vdots \\
		\hat{\theta}_n[k]^\top -\theta_n^\top
	\end{bmatrix}V[k]^{1/2}  
	\right\|\nonumber\\
    &\hspace{.8in}\times\left\|V[k]^{-1/2}
	\begin{bmatrix}
		x[k'] \\
		u[k']
	\end{bmatrix}\right\|\nonumber\\
	\leq & \frac{1}{\sqrt{\sigma_{\min}(V[k])}}\left\|\begin{bmatrix}
		\hat{\theta}_1[k]^\top-\theta_1^\top \\
		\vdots \\
		\hat{\theta}_n[k]^\top -\theta_n^\top
	\end{bmatrix}V[k]^{1/2}  
	\right\|\nonumber\\
    &\hspace{.8in}\times\left\|
	\begin{bmatrix}
		x[k'] \\
		u[k']
	\end{bmatrix}\right\|\nonumber\\
	\leq & \frac{1}{\sqrt{\sigma_{\min}(V[k])}}\left\|\begin{bmatrix}
		\hat{\theta}_1[k]^\top-\theta_1^\top \\
		\vdots \\
		\hat{\theta}_n[k]^\top -\theta_n^\top
	\end{bmatrix}V[k]^{1/2}  
	\right\|\nonumber\\
    &\hspace{.8in}\times(\radius(\mathcal{X}_{k'})+\radius(\mathcal{U})),\nonumber\\
	\leq & \frac{\radius(\mathcal{X}_{k'})+\radius(\mathcal{U})}{\sqrt{\lfloor k/T_0 \rfloor \alpha +\lambda}}\nonumber\\
    &\hspace{.8in}\times\left\|\begin{bmatrix}
		\hat{\theta}_1[k]^\top-\theta_1^\top \\
		\vdots \\
		\hat{\theta}_n[k]^\top -\theta_n^\top
	\end{bmatrix}V[k]^{1/2}  
	\right\|,\nonumber\\
	\leq & \frac{d}{\sqrt{\lfloor k/T_0 \rfloor \alpha +\lambda}}\left\|\begin{bmatrix}
		\hat{\theta}_1[k]^\top-\theta_1^\top \\
		\vdots \\
		\hat{\theta}_n[k]^\top -\theta_n^\top
	\end{bmatrix}V[k]^{1/2}  
	\right\|,\nonumber
\end{align}
where the second last inequality follows from~\eqref{eqn:matrix_inequality}. Hence,
\begin{align*}
    &\sqrt{\|\revise{v[k',k]}\|\;\|\revise{v[k'',k]}\|}\\
    &\quad \leq 
    \frac{d}{\sqrt{\lfloor k/T_0 \rfloor \alpha +\lambda}}\left\|\begin{bmatrix}
		\hat{\theta}_1[k]^\top-\theta_1^\top \\
		\vdots \\
		\hat{\theta}_n[k]^\top -\theta_n^\top
	\end{bmatrix}V[k]^{1/2}  
	\right\|,
\end{align*}
and as a result
\begin{align*}
	\mathbb{P}\Bigg\{&\sqrt{\|\revise{v[k',k]}\|\;\|\revise{v[k'',k]}\|}
	\leq \zeta'_k n\beta_k\left(\frac{\delta}{n}\right)\Bigg\}\\
	&\geq \mathbb{P}\left\{\left\|\begin{bmatrix}
		\hat{\theta}_1[k]^\top-\theta_1^\top \\
		\vdots \\
		\hat{\theta}_n[k]^\top -\theta_n^\top
	\end{bmatrix}V[k]^{1/2}\right\|\leq n\beta_k\left(\frac{\delta}{n}\right) \right\}\\
	&\geq 1-\delta.
\end{align*}
This concludes the proof.

\section{Proof of Proposition~\ref{prop:upperbound_expectation}}
\label{proof:prop:upperbound_expectation}

  Because of Proposition~\ref{cor:confidence_PoE} and the facts that $n^{n/2}\geq n$, $\delta^{n/2}\leq \delta$, and $\det(V[k]) \preceq ((\lfloor k/T_0 \rfloor + 1) \gamma +\lambda)^n$ from~\eqref{eqn:matrix_inequality}, we get~\eqref{eqn:super_long}.
    \begin{figure*}
    \begin{align}
    \mathbb{P}&\left\{\sqrt{\|\revise{v[k',k]}\|\|\revise{v[k'',k]}\|}
	> \frac{\displaystyle dn\left(r\sqrt{n\log\left(\frac{((\lfloor k/T_0 \rfloor + 1) \gamma +\lambda) n}{\lambda\delta}\right)}+\lambda^{1/2}s\right) }{\sqrt{\lfloor k/T_0 \rfloor \alpha +\lambda}} \right\}
	\nonumber\\
	&\hspace{.7in}= 
    \mathbb{P}\left\{\sqrt{\|\revise{v[k',k]}\|\|\revise{v[k'',k]}\|}
	> \frac{\displaystyle dn\left(r\sqrt{2\log\left(\frac{((\lfloor k/T_0 \rfloor + 1) \gamma +\lambda)^{n/2} n^{n/2}}{\lambda^{n/2}\delta^{n/2}}\right)}+\lambda^{1/2}s\right)}{\sqrt{\lfloor k/T_0 \rfloor \alpha +\lambda}} \right\}
	\nonumber\\
	&\hspace{.7in}\leq 
    \mathbb{P}\left\{\sqrt{\|\revise{v[k',k]}\|\|\revise{v[k'',k]}\|}
	> \frac{\displaystyle dn\left(r\sqrt{2\log\left(\frac{((\lfloor k/T_0 \rfloor + 1) \gamma +\lambda)^{n/2} n}{\lambda^{n/2}\delta}\right)}+\lambda^{1/2}s\right)}{\sqrt{\lfloor k/T_0 \rfloor \alpha +\lambda}} \right\}
    \nonumber\\
	&\hspace{.7in}\leq 
        \mathbb{P}\left\{\sqrt{\|\revise{v[k',k]}\|\|\revise{v[k'',k]}\|}
	> \frac{\displaystyle dn\left(r\sqrt{2\log\left(\frac{\det(V[k])^{1/2}n}{\lambda^{n/2}\delta}\right)}+\lambda^{1/2}s\right)}{\sqrt{\lfloor k/T_0 \rfloor \alpha +\lambda}} \right\}\leq \delta. \label{eqn:super_long}
    \end{align}
    \hrule 
    \end{figure*}
    By defining $u=dn(r\sqrt{2\log\left({\det(V[k])^{1/2}n/(\lambda^{n/2}\delta})\right)}+\lambda^{1/2}s)/{\sqrt{\lfloor k/T_0 \rfloor \alpha +\lambda}}$ and computing $\delta$ in terms of $u$, we get
    \begin{align}
    \mathbb{P}&\left\{\sqrt{\|\revise{v[k',k]}\|\|\revise{v[k'',k]}\|}
	> u\right\}\nonumber\\&\hspace{.5in}
	\leq \bar{\gamma}\exp\left(-\frac{1}{2}\left(\frac{u- \bar{\mu}}{\bar{\nu}}\right)^2\right),\quad \forall u\geq  \bar{\mu}, \label{eqn:exp_inequality}
    \end{align}
    where
    \begin{align*}
         \bar{\mu}:=&\frac{\sqrt{\lambda}sdn}{\sqrt{\lfloor k/T_0 \rfloor \alpha +\lambda}},\\ 
        \bar{\nu}:=&\frac{rdn^{3/2}}{\sqrt{2}\sqrt{\lfloor k/T_0 \rfloor \alpha +\lambda}},\\
        \bar{\gamma}:=&\frac{n((\lfloor k/T_0 \rfloor + 1) \gamma +\lambda)}{\lambda}.
    \end{align*}
	Using~\eqref{eqn:exp_inequality}, we get
	\begin{align*}
	\mathbb{E}\{\|&\revise{v[k',k]}\|^2\|\revise{v[k'',k]}\|^2\}\\
	&=
	\mathbb{E}\{(\sqrt{\|\revise{v[k',k]}\|\|\revise{v[k'',k]}\|})^4\}\\
	   &
	   =\int_0^{\infty} \mathbb{P}\{\sqrt{\|\revise{v[k',k]}\|\|\revise{v[k'',k]}\|}>u\}4u^3\mathrm{d}u
	   \\
	   &\leq \int_0^{ \bar{\mu}} 4u^3\mathrm{d}u
	   + 4\bar{\gamma}\int_{ \bar{\mu}}^{\infty} \exp\left(-\frac{1}{2}\left(\frac{u- \bar{\mu}}{\bar{\nu}}\right)^2\right) u^3\mathrm{d}u.
	\end{align*}
	Note that
	\begin{align*}
	    \int_{ \bar{\mu}}^{\infty} \exp&\left(-\frac{1}{2}\left(\frac{u- \bar{\mu}}{\bar{\nu}}\right)^2\right) u^3\mathrm{d}u\\
	    =&
	    \int_{0}^{\infty} \exp\left(-\frac{1}{2}\left(\frac{z}{\bar{\nu}}\right)^2\right) (z+ \bar{\mu})^3\mathrm{d}z\\
	    \leq &
	    \int_{0}^{\infty} \exp\left(-\frac{1}{2}\left(\frac{z}{\bar{\nu}}\right)^2\right) (4z^3+ 4\bar{\mu}^3)\mathrm{d}z\\
	    = &
	    4\int_{0}^{\infty} \exp\left(-\frac{z^2}{2\bar{\nu}^2}\right) z^3\mathrm{d}z\\
        &+ 4\int_{0}^{\infty} \exp\left(-\frac{z^2}{2\bar{\nu}^2}\right)  \bar{\mu}^3\mathrm{d}z\\
	    = &8\bar{\nu}^4+2\sqrt{2\pi}\bar{\nu}\bar{\mu}^3\\
	    = &2\bar{\nu}(4\bar{\nu}^3+\sqrt{2\pi}\bar{\mu}^3),
	\end{align*}
	where the first inequality follows from the convexity of $x\mapsto x^3$ over the positive real numbers. Therefore,
	\begin{align*}
	    \mathbb{E}\{\|\revise{v[k',k]}\|^2&\|\revise{v[k'',k]}\|^2\}\\
	   \leq&  \bar{\mu}^4
	   + 8\bar{\gamma}\bar{\nu}(4\bar{\nu}^3+\sqrt{2\pi}\bar{\mu}^3)\\
	   =&\frac{\lambda^2 s^4d^4n^4}{(\lfloor k/T_0 \rfloor \alpha +\lambda)^2}\\
        &+\frac{((\lfloor k/T_0 \rfloor + 1) \gamma +\lambda)}{\lambda}\\
        &\quad \times\frac{8rd^4n^{11/2}(r^3n^{3/2}+
	   \sqrt{\pi}\lambda^{3/2}s^3)}{(\lfloor k/T_0 \rfloor \alpha +\lambda)^2}.
	\end{align*}
	Also, by setting $k''=k'$, we get
	\begin{align*}
        \mathbb{P}\left\{\|\revise{v[k',k]}\|> u\right\}
	\leq \bar{\gamma}\exp\left(-\frac{1}{2}\left(\frac{u- \bar{\mu}}{\bar{\nu}}\right)^2\right), \forall u\geq  \bar{\mu}. 
    \end{align*}
	Similarly, we have
	\begin{align*}
	\mathbb{E}\{\|\revise{v[k',k]}\|^2\}
	  =&\int_0^{\infty} \mathbb{P}\{\|\revise{v[k',k]}\|>u\}2u\mathrm{d}u\\
	   \leq& \int_0^{ \bar{\mu}} 2u\mathrm{d}u\\
	   &+ 2\bar{\gamma}\int_{ \bar{\mu}}^{\infty} \exp\left(-\frac{1}{2}\left(\frac{u- \bar{\mu}}{\bar{\nu}}\right)^2\right) u\mathrm{d}u.
	\end{align*}
	Note that
	\begin{align*}
	    \int_{ \bar{\mu}}^{\infty} &\exp\left(-\frac{1}{2}\left(\frac{u- \bar{\mu}}{\bar{\nu}}\right)^2\right) u \mathrm{d}u
	    \\=&
	    \int_{0}^{\infty} \exp\left(-\frac{1}{2}\left(\frac{z}{\bar{\nu}}\right)^2\right) (z+ \bar{\mu})\mathrm{d}z\\
	    =&
	    \int_{0}^{\infty} \exp\left(-\frac{z^2}{2\bar{\nu}^2}\right) z\mathrm{d}z+ \int_{0}^{\infty} \exp\left(-\frac{z^2}{2\bar{\nu}^2}\right)  \bar{\mu}\mathrm{d}z\\
	    = &\bar{\nu}^2+\sqrt{\pi/2}\bar{\nu}\bar{\mu}\\
	    = &\bar{\nu}(\bar{\nu}+\sqrt{\pi/2}\bar{\mu}).
	\end{align*}
	Therefore,
	\begin{align*}
	    \mathbb{E}\{\|\revise{v[k',k]}\|^2\}
	   \leq&  \bar{\mu}^2
	   + 2\bar{\gamma}\bar{\nu}(\bar{\nu}+\sqrt{\pi/2}\bar{\mu})\\
	   =&\frac{\lambda s^2d^2n^2}{\lfloor k/T_0 \rfloor \alpha +\lambda}\\&+\frac{((\lfloor k/T_0 \rfloor + 1) \gamma +\lambda)}{\lambda}\\
    &\quad\times\frac{rd^2n^{7/2}(
	   rn^{1/2}+
	   \sqrt{\pi\lambda}s )}{(\lfloor k/T_0 \rfloor \alpha +\lambda)}.
	\end{align*}
	This concludes the proof. 

\fi 

\section{Proof of Theorem~\ref{tho:safe_projection}} \label{proof:tho:safe_projection}
We first show that the projection of the control action $\bar{u}[k]$ to a safe set can be done by solving:
\begin{subequations} \label{eqn:optim_safe}
	\begin{align}
		u[k]\in&\argmin_{u\in\mathcal{U}}  \mathbf{d}(u,\bar{u}[k]),\\
		&\quad\, \mathrm{s.t.}  \quad H[k+1](\hat{A}[k]x[k] + \hat{B}[k] u +v +w)\nonumber\\
        &\hspace{2.1in}\leq h[k+1], \quad 
		\nonumber\\
		& \hspace{.5in}\forall w: w^\top w \leq \frac{2rn}{\delta},\nonumber\\
		& \hspace{.5in} \forall v: v^\top v \leq \frac{n^2d^2}{\sigma_{\min}(V[k])}\beta_k^2\left(\frac{\delta}{2n}\right),
	\end{align}
\end{subequations}
Note that $
x[k+1]=
\hat{A}x[k]+\hat{B}u[k]
+
v[k]+w[k]$, 
where $v[k]=(A-\hat{A}[k])x[k]+(B-\hat{B}[k])u[k].$
Therefore, proving the safety of the projected control action in~\eqref{eqn:optim_safe} follows from bounding the noise and perturbation terms $v[k]$ and $w[k]$ with high probability. Proposition~\ref{cor:confidence} implies that
\begin{align*}
	\mathbb{P}\left\{\|v[k]\|^2
	\leq \zeta^2 n^2\beta_k^2\left(\!\frac{\delta}{2n}\!\right)\!\right\}
    \!=&\mathbb{P}\left\{\|v[k]\|
	\leq \zeta n\beta_k\left(\!\frac{\delta}{2n}\!\right)\!\right\}\\
    \geq& 1-\frac{\delta}{2},
\end{align*}
where $\zeta=d/{\sqrt{\sigma_{\min}(V[k])}}$. For the process noise, we have
\begin{align*}
	\mathbb{P}\{w[k]^\top (rI)^{-1} w[k]\leq \varepsilon\}
	\geq &
	\mathbb{P}\{w[k]^\top W^{-1} w[k]\leq \varepsilon\}\\
	\geq & 1-\frac{\mathbb{E}\{w[k]^\top W^{-1} w[k]\}}{\varepsilon} \\
	= & 1-\frac{n}{\varepsilon},
\end{align*}
where the first inequality follows from Assumption~\ref{assum:bound_on_variance} and the second inequality follows from an application of Markov's inequality for scalar random variables~\cite[\S\,2.1]{boucheron2013concentration}. 
Selecting $\varepsilon=(2n)/\delta$ gives
$\mathbb{P}\{w[k]^\top (rI)^{-1} w[k]\leq (2n)/\delta\}\geq 1-\delta/2.$ 
Finally, we note that
\begin{align*}
	&\mathbb{P}\left\{w[k]^\top w[k]\leq \frac{2rn}{\delta} \bigwedge \|v[k]\|
	\leq \zeta n\beta_k\left(\frac{\delta}{2n}\right)\right\}
	\\ &\hspace{.1in}=1\!-\!\mathbb{P}\left\{\!w[k]^\top w[k]> \frac{2rn}{\delta} \bigvee \|v[k]\|
	\!>\! \zeta n\beta_k\left(\!\frac{\delta}{2n}\!\right)\!\right\}\\
	&\hspace{.1in}\geq 1\!-\!\mathbb{P}\left\{w[k]^\top w[k]> \frac{2rn}{\delta}\right\} \\
    &\hspace{.32in}-\!\mathbb{P}\left\{\|v[k]\|
	> \zeta n\beta_k\left(\frac{\delta}{2n}\right)\right\}\\
	&\hspace{.1in}=1\!-\!\delta,
\end{align*}
where the inequality follows from the union bound. 

Finally, Lemma~\ref{lemma:robust} can be used to eliminate $v$ in~\eqref{eqn:optim_safe} to obtain
	\begin{align*}
		u[k]\in&\argmin_{u\in\mathcal{U}}  \mathbf{d}(u,\bar{u}[k]),\\
		&\quad\, \mathrm{s.t.}  \quad H[k\!+\!1](\hat{A}[k]x[k] + \hat{B}[k] u +w)\leq h[k\!+\!1]\\
        &\hspace{1in}-e[k\!+\!1], \quad \forall w: w^\top w \leq \frac{2rn}{\delta},
	\end{align*}
	where $e_i[k+1]=({dn}/\sqrt{\sigma_{\min}(V[k])}) \beta_k({\delta}/({2n}))\|H_i[k+1]^\top\|.$ An additional application of Lemma~\ref{lemma:robust} to eliminate $w$ concludes the proof.

\ifdefined\LONGVERSION
 \section{Proof of Proposition~\ref{prop:PoE_model}}
 \label{proof:prop:PoE_model}
 Let us define
\begin{align*}
	\Xi_{T_0}(k):=\begin{bmatrix} 
		\displaystyle\sum_{t=k}^{k+T_0-1} x[t]x[t]^\top &
		\displaystyle\sum_{t=k}^{k+T_0-1} x[t]u[t]^\top\\[.5em]
		\displaystyle\sum_{t=k}^{k+T_0-1} u[t]x[t]^\top &
		\displaystyle\sum_{t=k}^{k+T_0-1} u[t]u[t]^\top 
	\end{bmatrix}.
\end{align*}
Note that 
\begin{align*}
	V[k]=\lambda I&\!+\! \Xi_{T_0}(0)\!+\!\Xi_{T_0}(T_0)\!+\!\cdots\!+\!\Xi_{T_0}((\lfloor k/T_0 \rfloor\!-\!1) T_0)\\
	&+\begin{bmatrix} 
		\displaystyle\sum_{t=\lfloor k/T_0 \rfloor T_0}^{k} x[t]x[t]^\top &
		\displaystyle\sum_{t=\lfloor k/T_0 \rfloor T_0}^{k} x[t]u[t]^\top\\[.5em]
		\displaystyle\sum_{t=\lfloor k/T_0 \rfloor T_0}^{k} u[t]x[t]^\top &
		\displaystyle\sum_{t=\lfloor k/T_0 \rfloor T_0}^{k} u[t]u[t]^\top 
	\end{bmatrix},
\end{align*}
and as a result
\begin{align*}
	\lambda I+ \sum_{j=0}^{\lfloor k/T_0 \rfloor -1}\Xi_{T_0}(jT_0) \preceq V[k] \preceq \lambda I+\sum_{j=0}^{\lfloor k/T_0 \rfloor }\Xi_{T_0}(jT_0).
\end{align*}
Under persistence of excitation, $\alpha I \preceq \Xi_{T_0}(k) \preceq \gamma I$ for all $k$ and therefore 
\begin{align} \label{eqn:matrix_inequality}
	(\lfloor k/T_0 \rfloor \alpha +\lambda) I \preceq V[k] \preceq ((\lfloor k/T_0 \rfloor + 1) \gamma +\lambda) I.
\end{align}
Hence, it must be that $\det(V[k])=\mathcal{O}(k^n)$ and $\sigma_{\min}(V[k])=\mathcal{O}(k)$. Recalling the definitions of $\beta_k$ and $\zeta$, we obtain 
\begin{align*}
	\zeta n\beta_k\left(\frac{\delta}{2n}\right)
	&=\frac{\displaystyle nd\left[r\sqrt{2\log\left(\frac{2n\det(V[k])^{1/2}}{\lambda^{n/2}\delta}\right)}\!+\!\lambda^{1/2}s\right]}{\sqrt{\sigma_{\min}(V[k])}}
	\\&=\mathcal{O}\left(\frac{\sqrt{\log(\det(V[k])^{1/2})}}{\sqrt{\sigma_{\min}(V[k])}}\right)
	\\&=\mathcal{O}\left(\frac{\sqrt{\log(k)}}{\sqrt{k}}\right), 
\end{align*}
which concludes the proof as $\lim_{k\rightarrow \infty}\sqrt{\log(k)}/\sqrt{k}=0$.

\section{Proof of Theorem~\ref{tho:safe_with_W}}
\label{proof:tho:safe_with_W}

First, we must prove that safety can be achieved by projecting the control action $\bar{u}[k]$ using
\begin{subequations} \label{eqn:optim_safe_also_W}
	\begin{align}
		u[k]\!\in\! &\argmin_{u\in\mathcal{U}}  \mathbf{d}(u,\bar{u}[k]),\\
		&\quad\, \mathrm{s.t.}  \, H[k\!+\!1](\hat{A}[k_0]x[k]\!+\!\hat{B}[k_0]u\!+\!v\!+\!w)\!\leq\! h[k\!+\!1], \quad 
		\nonumber\\
		& \hspace{.5in}\forall w: w[k]^\top \Pi_{k,k_0} w[k]\leq \frac{3}{n\delta},\nonumber\\
		& \hspace{.5in} \forall v: v^\top v \leq \frac{n^2d^2}{\sigma_{\min}(V[k])}\beta_k^2\left(\frac{\delta}{3n}\right),
	\end{align}
\end{subequations}
where 
\begin{align*}
	\Pi_{k,k_0}^{-1}:=&\revise{\widehat{W}[k-1,k_0]}\\
 &+\sqrt{\frac{3}{\delta}\left(\!2L_4(k_0)^2\!+\!\frac{8rL_2(k_0)}{k-k_0}\!+\!\frac{2r^2n(n\!+\!1)}{k-k_0}\right)} I,
\end{align*}
To do so, define 
$$
    \revise{\overline{W}[k,k_0]}=\frac{1}{k-k_0}\sum_{t=k_0+1}^k w[k]w[k]^\top,
$$
and
$
    \revise{v[k,k_0]}=(A-\hat{A}[k_0])x[k]+(B-\hat{B}[k_0])u[k].
$
Because $\hat{w}[k]=w[k]+\revise{v[k,k_0]}$, we get
\begin{align*}
    \mathbb{E}\{&\trace((\revise{\widehat{W}[k,k_0]}-\revise{\overline{W}[k,k_0]})^2)\}\\
    &\leq \mathbb{E}\{(\trace(\revise{\widehat{W}[k,k_0]}-\revise{\overline{W}[k,k_0]}))^2\}\\
    &=\mathbb{E}\left\{\left(\frac{1}{k-k_0}\left(\sum_{t=k_0+1}^k \|v[t]\|^2+2w[t]^\top v[t]\right)\right)^2\right\}\\
    &=\frac{1}{(k-k_0)^2}\sum_{t=k_0+1}^k\sum_{t'=k_0+1}^k \mathbb{E}\{\|v[t]\|^2\|v[t']\|^2\}
    \\&\quad+\frac{4}{(k-k_0)^2}\sum_{t=k_0+1}^k\sum_{t'=k_0+1}^k \mathbb{E}\{\|v[t]\|^2w[t']^\top v[t']\}
    \\&\quad+\frac{4}{(k-k_0)^2}\sum_{t=k_0+1}^k\sum_{t'=k_0+1}^k \mathbb{E}\{v[t]^\top  w[t] w[t']^\top v[t']\}\\
    &=\frac{1}{(k-k_0)^2}\sum_{t=k_0+1}^k\sum_{t'=k_0+1}^k \mathbb{E}\{\|v[t]\|^2\|v[t']\|^2\}
    \\&\quad+\frac{4}{(k-k_0)^2}\sum_{t=k_0+1}^k \mathbb{E}\{v[t]^\top  w[t] w[t]^\top v[t]\}\\
    &\leq L_4(k_0)+\frac{4rL_2(k_0)}{k-k_0},
\end{align*}
where the first inequality is a consequence of~\cite[\S4.1.2, Item~(13)]{l1996handbook}, the penultimate equality follows from that $w[k']$ and $v[k]$ are independent for $k',k> k_0$, and the last inequality follows from Proposition~\ref{prop:upperbound_expectation}. Furthermore, following the same line of reasoning as in the proof of Lemma III.7 in~\cite{farokhi_safe_lerning}, we get 
    \begin{align*}
        \mathbb{E}\{\trace((\revise{\overline{W}[k,k_0]}-W)^2)\}
        &=\frac{\trace(W)^2+\trace(W^2)}{k-k_0}\\
        &\leq \frac{r^2n(n+1)}{k-k_0}.
    \end{align*}
    We also have
\begin{align*}
    \mathbb{E}\{&\trace((\revise{\widehat{W}[k,k_0]}-W)^2)\}\\
    &=\mathbb{E}\{\trace((\revise{\widehat{W}[k,k_0]}-W+\revise{\overline{W}[k,k_0]} -\revise{\overline{W}[k,k_0]})^2)\}\\
    &\leq 2\mathbb{E}\{\trace((\revise{\widehat{W}[k,k_0]} -\revise{\overline{W}[k,k_0]})^2)\}\\
    &\quad+2\mathbb{E}\{\trace((\revise{\overline{W}[k,k_0]}-W)^2)\}\\
    &\leq 2L_4(k_0)^2+\frac{8rL_2(k_0)}{k-k_0}+\frac{2r^2n(n+1)}{k-k_0}. 
\end{align*}
The matrix version of Chebyshev's inequality~\cite{farokhi_safe_lerning} results in
	\begin{align*}
		\mathbb{P}\{&W-\revise{\widehat{W}[k,k_0]}\preceq \varepsilon I\}\\
		&\geq 1-\trace(\mathbb{E}\{ (\revise{\widehat{W}[k,k_0]}-W)^{2}\varepsilon^{-2}\})\\
		&\geq  1-\frac{1}{\varepsilon^2}\underbrace{\left(2L_4(k_0)^2+\frac{8rL_2(k_0)}{k-k_0}+\frac{2r^2n(n+1)}{k-k_0}\right)}_{:=\ell(k_0,k)}.
	\end{align*}
	Setting $ \varepsilon^2 \!=\! 3\ell(k_0,k)/\delta$ shows that
	\begin{align*}
		\mathbb{P}\left\{W-\revise{\widehat{W}[k,k_0]}\preceq \sqrt{\frac{3\ell(k_0,k)}{\delta}} I\right\}
		&\geq 1-\frac{\delta}{3}.
	\end{align*}
    Define
    \begin{align*}
        \Omega:=\revise{\widehat{W}[k-1,k_0]}+\sqrt{\frac{3\ell(k_0,k-1)}{\delta}} I.
    \end{align*}
	Hence,
    \begin{align*}
    \mathbb{P}&\left\{w[k]^\top \Omega^{-1}w[k]\leq \rho \right\}
    \\=&\mathbb{P}\left\{\!w[k]^\top\Omega^{-1}w[k]\leq \rho \Big| W\!-\!\revise{\widehat{W}[k,k_0]}\!\preceq \!\sqrt{\frac{3\ell(k_0,k)}{\delta}} I\right\}\\
    &\quad\times\mathbb{P}\!\left\{W-\revise{\widehat{W}[k,k_0]}\preceq \sqrt{\frac{3\ell(k_0,k)}{\delta}} I\!\right\}\\
    &\!+\!\mathbb{P}\!\left\{\!w[k]^\top\Omega^{-1}w[k]\!\leq\! \rho \Big| W\!-\!\revise{\widehat{W}[k,k_0]}\!\npreceq \!\sqrt{\frac{3\ell(k_0,k)}{\delta}} I\!\right\}\\
    &\quad\times\mathbb{P}\left\{W-\revise{\widehat{W}[k,k_0]}\npreceq \sqrt{\frac{3\ell(k_0,k)}{\delta}} I\right\}\\
    \geq &\mathbb{P}\left\{\!w[k]^\top\Omega^{-1}w[k]\leq \rho \Big| W\!-\!\revise{\widehat{W}[k,k_0]}\!\preceq \!\sqrt{\frac{3\ell(k_0,k)}{\delta}} I\right\}\\
    &\quad\times\mathbb{P}\left\{W-\revise{\widehat{W}[k,k_0]}\preceq \sqrt{\frac{3\ell(k_0,k)}{\delta}} I\right\}\\
    \geq &\mathbb{P}\left\{w[k]^\top W^{-1}w[k]\leq \rho \right\}\left(1-\frac{\delta}{3}\right)\\
    \geq &\left(1-\frac{\mathbb{E}\{w[k]^\top W^{-1}w[k]\}}{\rho}\right)\left(1-\frac{\delta}{3}\right)\\
    = &\left(1-\frac{n}{\rho}\right)\left(1-\frac{\delta}{3}\right).
	\end{align*}
	Selecting $\rho=3/(n\delta)$, we get
	\begin{align*}
    \mathbb{P}\left\{w[k]^\top\Omega^{-1}w[k]\leq \rho \right\}
    &\geq \left(1-\frac{\delta}{3}\right)^2\\
    &\geq \left(1-\frac{2\delta}{3}\right).
	\end{align*}
	Also, Corollary~\ref{cor:confidence} implies that
	\begin{align*}
		\mathbb{P}\Bigg\{\|v[k]\|^2
		\leq& \zeta^2 n^2\beta_k^2\left(\frac{\delta}{3n}\right)\Bigg\}\\
        &=\mathbb{P}\left\{\|v[k]\|
		\leq \zeta n\beta_k\left(\frac{\delta}{3n}\right)\right\}\\&\geq 1-\frac{\delta}{3},
	\end{align*}
	where $\zeta=d/\sqrt{\sigma_{\min}(V[k])}$. 
	Finally, we note that
	\begin{align*}
		\mathbb{P}&\left\{w[k]^\top\Omega^{-1}w[k]\leq \frac{3}{n\delta} \bigwedge \|v[k]\|
		\leq \zeta n\beta_k\left(\frac{\delta}{3n}\right)\right\}
	   \\ &=1\!-\!\mathbb{P}\left\{\!w[k]^\top\Omega^{-1}w[k]\!>\! \frac{3}{n\delta} \bigvee \|v[k]\|
		\!>\! \zeta n\beta_k\!\left(\frac{\delta}{3n}\right)\!\right\}\\
		&\geq 1\!-\!\mathbb{P}\left\{w[k]^\top\Omega^{-1}w[k]> \frac{3}{n\delta} \right\} \\
        &\hspace{.2in}-\mathbb{P}\left\{\|v[k]\|
		> \zeta n\beta_k\left(\frac{\delta}{3n}\right)\right\}\\
		&=1\!-\!\delta,
	\end{align*}
	where the inequality is the consequence of the union bound. 

    Finally, we can rewrite~\eqref{eqn:optim_safe_also_W} as~\eqref{eqn:optim_safe_tightened_with_W} using Lemma~\ref{lemma:robust}.
\fi 

\end{document}